\documentclass{article}
\usepackage[preprint]{neurips_2026}

\usepackage[utf8]{inputenc} % allow utf-8 input
\usepackage[T1]{fontenc}    % use 8-bit T1 fonts
\usepackage{hyperref}       % hyperlinks
\usepackage{url}            % simple URL typesetting
\usepackage{booktabs}       % professional-quality tables
\usepackage{amsfonts}       % blackboard math symbols
\usepackage{nicefrac}       % compact symbols for 1/2, etc.
\usepackage{xcolor}         % colors
\usepackage{amsmath}
\usepackage{amsthm}
\usepackage{graphicx}
\usepackage{amssymb}
\usepackage{algorithm}
\usepackage{algpseudocode}
\usepackage{float}
\usepackage{enumitem}
\usepackage{multirow}
\usepackage{wrapfig}
\usepackage{capt-of}

\DeclareFontShape{T1}{ptm}{m}{scit}{<->ssub*ptm/m/sc}{}{}

\newtheorem{definition}{Definition}
\newtheorem{proposition}{Proposition}

\theoremstyle{remark}

\title{State-Centric Decision Process}

\author{
    \begin{tabular}{ccc}
        \normalfont\mdseries SUNGHEON JEONG$^1$ &
        \normalfont\mdseries RYOZO MASUKAWA$^{1}$ &
        \normalfont\mdseries SANGGEON YUN$^1$
    \end{tabular}
    \\[0.7em]
    \begin{tabular}{cc}
        \normalfont\mdseries MAHDI IMANI$^2$ &
        \normalfont\mdseries MOHSEN IMANI$^1$
    \end{tabular}
    \\[1.0em]
    \begin{tabular}{c}
        $^{1}$University of California, Irvine, $^{2}$Northeastern University \\
        {\tt\small sungheoj@uci.edu}
    \end{tabular}
}

\begin{document}
    \maketitle

    \begin{abstract}
    Language environments such as web browsers, code terminals, and interactive simulations emit raw text rather than states, and provide none of the runtime structure that MDP analysis requires. No explicit state space, no observation-to-state mapping, no certified transitions, and no termination criterion. We introduce the State-Centric Decision Process (SDP), a runtime framework that constructs these missing inputs by having the agent build them, predicate by predicate, as it acts. At each step the agent commits to a natural-language predicate describing how the world should look, takes an action to make it true, and checks the observation against it. Predicates that pass become certified states, and the resulting trajectory carries the four objects language environments do not provide, namely a task-induced state space, an observation-to-state mapping, certified transitions, and a termination criterion. We evaluate SDP on five benchmarks spanning planning, scientific exploration, web reasoning, and multi-hop question answering. SDP achieves the best training-free results on all five, with the advantage widening as the horizon grows. The certified trajectories additionally support analyses unavailable to reactive agents, including per-predicate credit assignment, failure localization, partial-progress measurement, and modular operator replacement.
\end{abstract}
    \section{Introduction}
    \label{sec:intro}

    Language agents operate in environments that were never designed for autonomous decision-making, from web browsers and code terminals to interactive simulations and multi-step tool-use pipelines~\citep{yoran2024assistantbench, zhou2023webarena, jimenez2023swe, wang2022scienceworld, xie2024travelplanner}. Large language models have made this possible, absorbing enough world knowledge from vast training corpora to select reasonable actions from raw observations without environment-specific engineering~\citep{brown2020language, wei2022chain, achiam2023gpt, huang2022language, xi2025rise}. Yet success at action selection does not give the trajectory any formal structure: no explicit states, no verified transitions, nothing on which downstream methods can define transitions or assign credit. Language environments cannot supply this structure even in principle. The same situation admits unboundedly many valid natural-language descriptions, and which counts as a useful state is determined by the goal, a choice the environment has no access to. Without a state space, sequential decision-making has no surface to operate on.

    Existing language agents address parts of this gap but none closes it. Reactive agents~\citep{yao2022react, schick2023toolformer} interleave reasoning with action selection yet operate directly on raw observations without constructing an explicit state. Reflective agents go further, accumulating verbal lessons or causal memories across episodes~\citep{shinn2023reflexion, majumder2023clin, zhao2024expel}, but these summaries are open-ended text rather than states linked by certified transitions. Action planners~\citep{wang2023plan, yao2023tree, zhou2023language} deliberate over candidate action sequences before or during execution, gaining the benefit of lookahead, but the plan entries are things to do rather than conditions to verify, so progress cannot be checked against the environment. World-model approaches~\citep{wang2023describe, hao2023reasoning, liu2023reason, sun2023adaplanner} construct internal descriptions of the environment, moving closest to explicit state, but the descriptions are consumed by the same module that selects actions, with no per-step certification that they actually hold. In every case the trajectory remains a sequence of raw observations and actions, never a sequence of verified states over which transitions and credit can be defined. On short tasks this is tolerable; on long-horizon problems, where errors compound and intermediate progress must be tracked, agents have no formal signal that progress is being made. 

    We propose to close this gap by having the agent construct its own Markov Decision Process (MDP) at runtime. We call the resulting framework the \emph{State-Centric Decision Process} (SDP). Instead of choosing actions and treating state as a byproduct, the agent first commits to a natural-language predicate describing how the world should look after the next action, a checkable condition on the resulting observation. It then takes an action intended to make that predicate true and checks the observation against it. Predicates that pass this check become certified states. By committing to a predicate before acting, the agent forces its intent into a form the environment can falsify, and the resulting trajectory carries precisely the four objects that MDP analysis requires but language environments do not provide: states, transitions, actions, and credit. SDP is therefore not a competitor to MDP-based formulations of agency, but the interface layer they presuppose.

        \textbf{Contributions.}
        \begin{enumerate}[leftmargin=1.8em, topsep=2pt, parsep=0pt]
            
            \item \textbf{Identifying the missing inputs.} We formalize a specification problem unstated in the literature: MDP-based analysis requires four objects no language environment supplies, and the gap is one of specification rather than sample complexity.

            \item \textbf{The SDP framework.} We introduce the State-Centric Decision Process, a runtime framework producing Markov trajectories by decomposing agency into operators \textsc{Propose}, \textsc{Realize}, \textsc{Validate}, and \textsc{Replan} over natural-language predicates.

            \item \textbf{Empirical evaluation.} SDP achieves the best training-free results on all five benchmarks, with the gap widening as task horizon grows, and ablations isolating each operator's contribution.
            
            \item \textbf{Trajectory as diagnostic artifact.} Certified trajectories support analyses unavailable to prior agents: failure localization, partial-progress tracking, cascade recording, and validator auditing.    	
        \end{enumerate}
    \section{Preliminaries: the MDP gap in language environments}
	\label{sec:missing-inputs}
	
    MDP analysis requires a state space, an observation-to-state mapping, a transition kernel, and a termination criterion~\citep{puterman2014markov, sutton1998reinforcement}. Language environments provide none of these. They emit unstructured text such as web pages, terminal outputs, and API responses rather than states, and their interface does not commit to what the agent should treat as state~\cite{xi2025rise, wang2024survey, sumers2023cognitive}. Without a fixed state space the remaining MDP constructs cannot even be stated, let alone estimated or optimized. This section identifies the gap and the four inputs any framework must supply before MDP analysis.
    
    \textbf{Useful state abstractions are goal-dependent.}
		Let $\mathcal{O}$ denote the space of raw environment outputs and $\mathcal{H} = \bigcup_{t \ge 0}(\mathcal{O} \times A)^t \times \mathcal{O}$ the space of interaction histories. An MDP state is the image of a history under an abstraction $\phi : \mathcal{H} \to S$ that preserves the Markov property~\citep{li2006towards, givan2003equivalence}. For a single $\phi$ to suffice, the distinctions it draws must be adequate for every goal the agent might pursue. In language environments this fails, since the distinctions that matter depend on the goal. Two histories that a summarization policy can safely identify must be separated by a checkout policy, and vice versa. Any $\phi$ fine enough to be Markov across all goals collapses toward the identity on $\mathcal{H}$, while any coarser $\phi$ is goal-specific. The useful abstraction is therefore not a single $\phi$ but a family $\{\phi_g\}_{g \in \mathcal{G}}$ indexed by goals~\citep{abel2018state, andrychowicz2017hindsight, schaul2015universal}, and the MDP formalism provides no mechanism to select one at runtime.

    \textbf{Once the state space goes, the rest follows.}
        A state space that varies with the goal makes the transition kernel $T : S \times A \to \Delta(S)$ undefined as a mathematical object, because its domain and codomain are not fixed sets. This is not a problem more data could resolve; function approximation requires a target object to approximate, and there is none. Every construct built on $T$, including value functions, Bellman backups, and policy gradients~\citep{sutton1998reinforcement, williams1992simple}, inherits the same gap. A termination criterion faces the same absence, since the goal is supplied with the task rather than the environment, and nothing in the raw output stream signals when the task is done. Two responses might be expected, and neither works. Fixing some maximally fine $\phi$ and treating the resulting space as truth is what the POMDP relaxation effectively does~\citep{kaelbling1998planning, young2013pomdp, murphy2000survey}, but its filtering equations presuppose precisely the $S$ and $T$ whose existence is in question. Skipping $\phi$ entirely and letting a neural network operate on raw history works empirically and is the dominant approach in language agents~\citep{yao2022react, wang2023voyager}, but it recovers no analytic object on which transitions, value, or progress can be defined.

	\textbf{The four missing inputs.}
		The preceding argument exposes four inputs that MDP analysis requires but that no language environment provides:
		\begin{enumerate}[leftmargin=1.8em, itemsep=2pt, topsep=4pt]
			\item \textbf{State space.} A set $S$ over which policies, values, and transitions are defined. No fixed $S$ exists because the useful abstraction is goal-indexed.
			\item \textbf{Observation-to-state mapping.} A function that turns each new observation into a state update. No single $\phi$ works across goals.
			\item \textbf{Certified transitions.} Tuples $(s, a, s')$ whose validity has been checked, not merely assumed from temporal adjacency. Without a shared $S$ there is no space in which to express them.
            \item \textbf{Termination criterion.} A predicate over the state space that signals task completion. Language environments emit no such signal; the goal is supplied with the task, not by the environment.
		\end{enumerate}
    \section{Method: State-Centric Decision Process}
	\label{sec:method}
	
    We introduce the \emph{State-Centric Decision Process} (SDP), a runtime framework that supplies the four missing inputs by having the agent construct them, predicate by predicate, as it acts. The constructed states are not observations, not summaries of observations, and not latent vectors. They are natural-language predicates that the agent commits to before acting, each describing how the world should look at a future step. The trajectory that results carries precisely what language environments do not provide, namely a state space populated by the task, a mapping from observations to state updates, transitions whose validity has been checked, and a termination test grounded in the goal (Figure~\ref{fig:sdp-structure}).

	\subsection{The State-Centric Decision Process}
		\label{sec:sdp-def}

        The framework rests on four operators over a state space $\Sigma$, formalized below.
        
		\begin{definition}[State-Centric Decision Process]
		\label{def:sdp}
		An SDP is a tuple 
		\[
			(\Sigma, A, \mathcal{O}, \mathcal{T}, g, \textsc{Propose}, \textsc{Realize}, \textsc{Validate}, \textsc{Replan}), 
		\]
		where $\Sigma$ is a space of natural-language predicates over $\mathcal{O}$, $A$ is the action space, $\mathcal{O}$ is the space of raw environment outputs, $\mathcal{T}$ is the space of certified trajectories accumulated during execution, and $g \in \Sigma$ is a goal predicate given with the task. The four functions are:
		\begin{align}
		\textsc{Propose}  &: \Sigma \times \Sigma \to \Sigma, & (s_t, g) &\mapsto \hat{s}_{t+1}, \label{eq:propose} \\
		\textsc{Realize}  &: \Sigma \times \Sigma \to A, & (s_t, \hat{s}_{t+1}) &\mapsto a_t, \label{eq:realize} \\
		\textsc{Validate} &: \Sigma^* \times \mathcal{O} \to \mathbb{N}, & (\hat{s}_{t+1}, \ldots, \hat{s}_n;\, o) &\mapsto k, \label{eq:validate} \\
		\textsc{Replan}   &: \Sigma \times \Sigma \times \mathcal{T} \to \Sigma^*, & (s_t, g, \tau_t) &\mapsto (\hat{s}_{t+1}, \ldots, \hat{s}_n). \label{eq:replan}
		\end{align}
        \textsc{Propose} (Eq.~\ref{eq:propose}) sets the next target from the current state and the goal. \textsc{Realize} (Eq.~\ref{eq:realize}) selects an action to make the target hold. \textsc{Validate} (Eq.~\ref{eq:validate}) takes the resulting observation and returns the number $k \geq 0$ of consecutive targets from $\hat{s}_{t+1}$ that the observation satisfies, with $k = 0$ meaning the immediate target is unmet. \textsc{Replan} (Eq.~\ref{eq:replan}) produces a new target sequence from the current state to $g$ when the plan is unrecoverable. Here $n$ is the current plan length.
		\end{definition}
        By construction, only \textsc{Validate} consumes raw observations; \textsc{Propose} and \textsc{Realize} operate entirely on $\Sigma$, so targets reflect the agent's intent rather than reactions to whatever the environment presents. The three operators in the normal loop each read only local inputs, never the preceding history. \textsc{Replan} is the sole exception, receiving $\tau_t$ as the recovery mechanism invoked when the normal loop cannot advance. This locality yields the Markov property (Proposition~\ref{prop:markov}).
                        
        \textbf{Optimizing over states, not actions.}
			Definition~\ref{def:sdp} reorganizes the decision problem. A reactive agent solves $a_t^* = \arg\max_{a \in A} P(\text{success} \mid h_t, a)$ at each step, with $A$ as its decision space. SDP takes the \emph{state plan} as decision variable, decomposing the problem into two coupled stages:
			\begin{align}
				(\hat{s}_1, \ldots, \hat{s}_n)^* &= \arg\max_{(\hat{s}_1, \ldots, \hat{s}_n) \in \Sigma^n} \; P\bigl(\text{plan reaches } g \,\big|\, s_0\bigr), \label{eq:outer} \\
				a_t^* &= \arg\max_{a \in A} \; P\bigl(\textsc{Validate}(\hat{s}_{t+1}, \textsc{Env}(a)) \geq 1 \,\big|\, s_t, \hat{s}_{t+1}\bigr). \label{eq:inner}
			\end{align}
			The outer objective (Eq.~\ref{eq:outer}) searches over $\Sigma^n$ for a predicate chain from $s_0$ to $g$. The inner objective (Eq.~\ref{eq:inner}) searches over $A$ for an action whose environment response $o_{t+1}$ satisfies the next predicate. \textsc{Propose} commits to the outer choice before execution begins, \textsc{Realize} solves the inner problem at each step, and \textsc{Validate} supplies the per-step signal that closes the loop. In this paper, both stages are realized through prompted LLM calls rather than explicit optimization. \textsc{Replan} exists to repair cases where the resulting plan turns out to be unreachable. Making the optimization explicit is the subject of the research program SDP is intended to enable, discussed in Section~\ref{sec:discussion}.

        \begin{figure*}[t]
    		\centering
    		\includegraphics[width=0.9\textwidth]{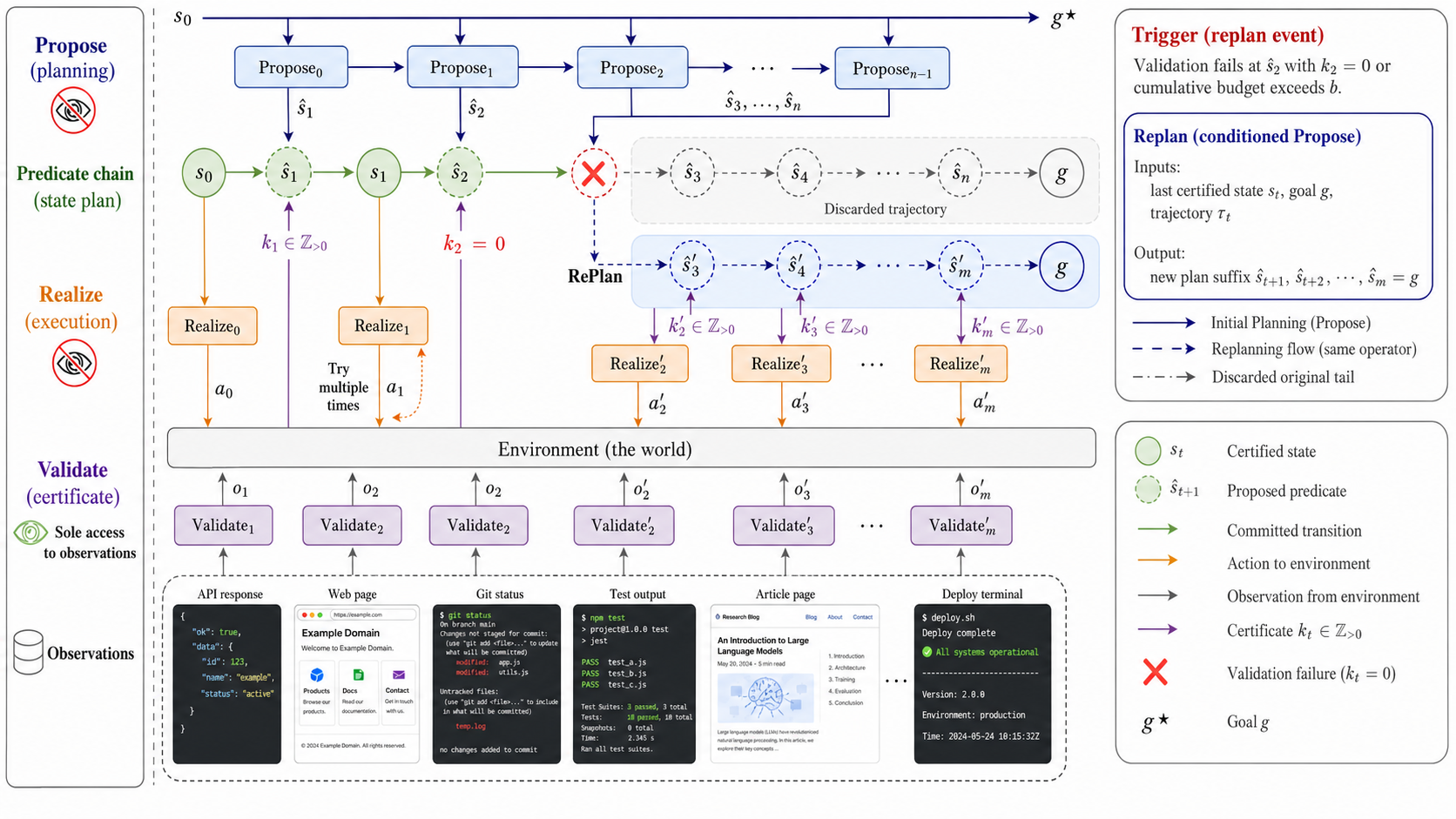}
            \vspace{-1.2em}
            \caption{\textbf{The SDP execution structure.} \textsc{Propose} builds the predicate chain from $s_0$ to $g$. \textsc{Realize} acts toward the next predicate. \textsc{Validate} checks the observation and is the sole interface for $\mathcal{O}$. When failures at a target exceed $b$, \textsc{Replan} discards and builds a new state.}
            \label{fig:sdp-structure}
    	\end{figure*}

    \subsection{The execution loop}
		\label{sec:execution}

        The SDP execution loop separates planning from stepping. The agent first applies \textsc{Propose} from $s_0$ and feeds each result back into \textsc{Propose} until it reaches $g$, producing a state plan $(\hat{s}_1, \ldots, \hat{s}_n = g)$. Algorithm~\ref{alg:sdp} gives the complete loop. At each step the agent selects an action via \textsc{Realize} and sends the environment's response to \textsc{Validate} (L.~4--6). If validation certifies one or more predicates, the cursor advances and the certified states enter the trajectory (L.~7--9). Otherwise the agent retries the same target, and after $b$ consecutive failures \textsc{Replan} replaces the remaining plan suffix (L.~10--14).

        \textbf{Cascade. One action can certify several targets.}
        	\textsc{Validate} returns an integer rather than a binary verdict because a single action sometimes satisfies multiple consecutive predicates at once. Such cascades arise from LLM knowledge spanning multiple sub-goals, predicate overlap, or overly fine-grained decomposition by \textsc{Propose}. For example, if the next two targets are ``the athlete's average pace is computed'' and ``the total distance is derived from that pace,'' a single Python snippet can satisfy both simultaneously. Treating the first target alone as satisfied would force a redundant second action whose outcome is already in hand. \textsc{Validate} therefore checks the observation against the head of the remaining plan, reports how many consecutive predicates are jointly satisfied, and advances the cursor by that many positions. The certified transitions take the form $(s_t, a_t, s_{t+k})$, generalizing single-step MDP transitions (Figure~\ref{fig:sdp-examples}).

\begin{figure*}[t]
\centering

\begin{minipage}{0.90\textwidth}
\centering

% ===================== Left: Algorithm =====================
\begin{minipage}[t]{0.48\linewidth}
\vspace{0pt}
\raggedright
\refstepcounter{algorithm}
\label{alg:sdp}

\hrule height 0.8pt
\vspace{2pt}
\noindent\textbf{Algorithm \thealgorithm} The SDP execution loop.
\vspace{2pt}
\hrule height 0.4pt
\vspace{3pt}

\scriptsize
\begin{algorithmic}[1]
\Require initial state $s_0$, goal $g$, attempt budget $b$
\State $(\hat{s}_1, \ldots, \hat{s}_n) \gets \textsc{BuildPlan}(s_0, g)$
\Statex \hfill $\triangleright$ Propose until $\hat{s}_n = g$
\State $\tau \gets (s_0)$, $\;t \gets 0$, $\;r \gets 0$
\While{$t < n$}
    \State $a_t \gets \textsc{Realize}(s_t, \hat{s}_{t+1})$
    \State $o_{t+1} \gets \textsc{Env}(a_t)$
    \State $k \gets \textsc{Validate}((\hat{s}_{t+1}, \ldots, \hat{s}_n), o_{t+1})$
    \If{$k \geq 1$}
        \State $s_{t+1:t+k} \gets \hat{s}_{t+1:t+k}$
        \Statex \hfill $\triangleright$ certify $k$ predicates
        \State $\tau \gets \tau \cup \{(a_t, s_{t+k})\}$
        \State $t \gets t + k$, $\;r \gets 0$
    \Else
        \State $r \gets r + 1$
        \If{$r > b$}
            \State $(\hat{s}_{t+1}, \ldots, \hat{s}_n) \gets \textsc{Replan}(s_t, g, \tau)$
            \State $r \gets 0$
        \EndIf
    \EndIf
\EndWhile
\State \Return $\tau$
\end{algorithmic}

\vspace{3pt}
\hrule height 0.8pt
\end{minipage}
\hfill
% ===================== Right: Figure =====================
\begin{minipage}[t]{0.48\linewidth}
\vspace{0pt}
\centering
\includegraphics[width=0.96\linewidth]{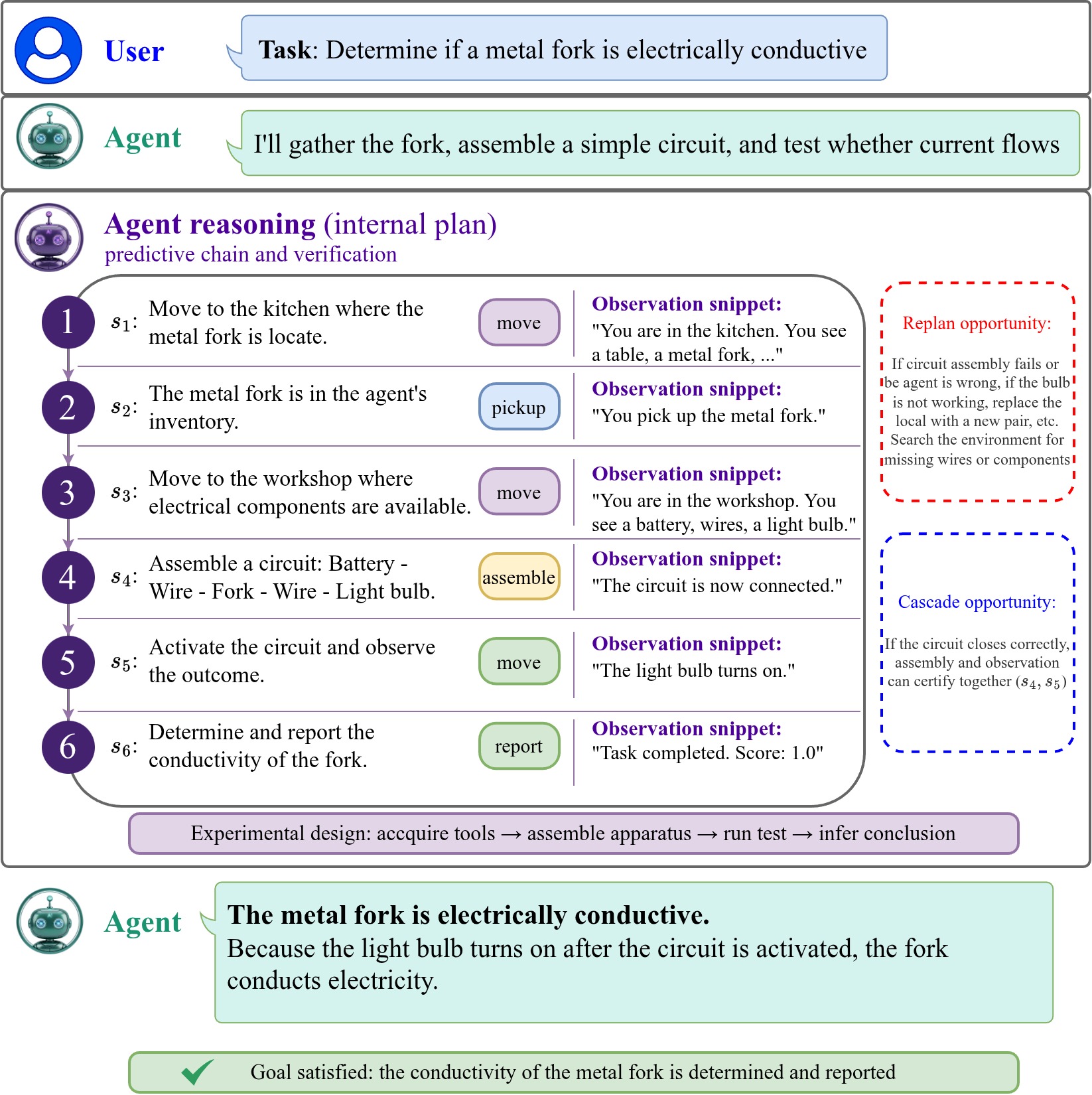}
\captionof{figure}{Example SDP run, with a cascade ($k \geq 2$) and a replan opportunity.}
\label{fig:sdp-examples}
\end{minipage}

\end{minipage}

\vspace{-0.5em}
\end{figure*}
 
		\paragraph{Failure in execution and failure in planning are corrected separately.}
            When an action fails to realize a target, \textsc{Realize} retries with a different action against the same predicate. The rest of the plan is untouched. The plan itself is revised only when repeated attempts at the same target exhaust the budget $b$. \textsc{Replan} then discards the plan tail and proposes a new sequence from $s_t$ to $g$ (Figure~\ref{fig:sdp-structure}). This separation matters most in long-horizon tasks. A broken action consumes one attempt and the agent retries, while a broken plan is locally repaired without restarting from $s_0$. Reactive agents must reconsider both whenever either fails. SDP corrects each on its own timescale.
                    
    \subsection{What an SDP run produces}
		\label{sec:sdp-analysis}

		An SDP run terminates with more than an answer to the task. It returns a structured artifact that records every certified state, the action that produced it, and the cascade depth recorded by each validation. The artifact has the form
		\[
		\bigl(s_0,\; a_0,\; s_{t_1},\; a_{t_1},\; \ldots,\; a_{T-1},\; s_T\bigr), \qquad s_T \models g,
		\]
		where the indices $t_1 < t_2 < \cdots < T$ mark the steps at which validation discharged at least one target. Each gap $t_{i+1} - t_i$ is the cascade depth $k$ at that step. This artifact closes the gap identified in Section~\ref{sec:missing-inputs} by supplying, item by item, the four objects that MDP analysis requires but that no language environment provides. Table~\ref{tab:supplies} states the correspondence explicitly.

\begin{wraptable}{r}{0.5\textwidth}
\vspace{-1.8em}
\centering
\caption{MDP inputs absent from language environments and the artifacts SDP produces. The last row adds a credit signal beyond the four.}
\label{tab:supplies}
\small
\begin{tabular}{@{}lp{0.32\textwidth}@{}}
\toprule
\textbf{MDP input} & \textbf{Supplied by SDP} \\
\midrule
$S$ & $\Sigma_g \subseteq \Sigma$, populated by the plan and cascades \\
\midrule
$\phi : \mathcal{O} \to S$ & \textsc{Validate}, mapping each observation to $k \in \mathbb{N}$ \\
\midrule
Transitions & Certified $(s_t, a_t, s_{t+k})$ from each successful validation \\
\midrule
Termination & The relation $s_T \models g$ \\
\midrule
Credit & The integer $k$ returned by \textsc{Validate} \\
\bottomrule
\end{tabular}
\end{wraptable}

        Table~\ref{tab:supplies} should not be read as a claim that we have applied any particular downstream method to the artifact. We have not. The claim is that each downstream method is now a \emph{well-posed problem} on this artifact, in a way it is not on raw histories of language environments. Making these problems well-posed is the contribution of this paper. Translating them into algorithms is the substance of the program described in Section~\ref{sec:discussion}. The artifact also carries a structural guarantee.
    
        \begin{proposition}[Markov property of SDP trajectories]
        \label{prop:markov}
        Let $\tau = (s_0, a_0, s_{t_1}, a_{t_1}, \ldots, s_{t_i})$ be a prefix of a trajectory generated by Algorithm~\ref{alg:sdp}, let $\text{prefix} = (s_0, a_0, \ldots, a_{t_i - 1})$ denote the history before $s_{t_i}$, and let $P_i = (\hat{s}_{t_i+1}, \ldots, \hat{s}_n)$ be the plan tail at step $t_i$. Assume the environment's response $P(o_{t_i+1} \mid a_{t_i}, \text{prefix})$ depends on the prefix only through $(s_{t_i}, P_i)$. Then the next certified state satisfies
        \begin{equation}
        \label{eq:markov}
        P\bigl(s_{t_{i+1}} \,\big|\, s_{t_i}, P_i, \text{prefix}\bigr) = P\bigl(s_{t_{i+1}} \,\big|\, s_{t_i}, P_i\bigr).
        \end{equation}
        \end{proposition}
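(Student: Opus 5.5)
The plan is to write $s_{t_{i+1}}$ as a function of $(s_{t_i}, P_i)$ and of random quantities whose conditional law depends only on $(s_{t_i}, P_i)$. The Markov identity (Eq.~\ref{eq:markov}) then follows by marginalizing. I would first make the generative process between two consecutive certifications explicit. Starting from the certified state $s_{t_i}$ with plan tail $P_i$, Algorithm~\ref{alg:sdp} repeatedly does three things. It draws $a \sim \textsc{Realize}(s_{t_i}, \hat{s}_{t_i+1})$. It draws $o \sim P(\cdot \mid a, \cdot)$. It computes $k = \textsc{Validate}(P_i; o)$. The loop stops the first time $k \ge 1$, and the algorithm sets $s_{t_{i+1}} = \hat{s}_{t_i+k}$, which is read directly off $P_i$. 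So the next certified state is a deterministic function $f(P_i, k)$ of the plan tail and of the first nonzero validation count. It suffices to show that the law of that count given $(s_{t_i}, P_i, \text{prefix})$ does not depend on $\text{prefix}$.

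I would handle the one-attempt case first. \textsc{Realize} takes only $(s_{t_i}, \hat{s}_{t_i+1})$ as input, and $\hat{s}_{t_i+1}$ is the head of $P_i$, so the action distribution is a function of $(s_{t_i}, P_i)$ alone; this uses the locality remark after Definition~\ref{def:sdp}. By the stated assumption, the distribution of $o_{t_i+1}$ given $a_{t_i}$ and the prefix depends on the prefix only through $(s_{t_i}, P_i)$. \textsc{Validate} reads only $(P_i, o)$. Chaining these conditionals gives
\[
P(k \mid s_{t_i}, P_i, \text{prefix}) = \sum_{a} \int P(k \mid P_i, o)\, P(o \mid a, s_{t_i}, P_i)\, P(a \mid s_{t_i}, P_i)\, do ,
\]
and the right-hand side is free of the prefix. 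Pushing this through $f$ yields Eq.~\ref{eq:markov} when the first attempt succeeds.

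The main obstacle is the retry-and-replan branch. Failed attempts generate intermediate observations. After $b$ failures, \textsc{Replan} reads the full trajectory $\tau$, so the new plan tail could depend on the prefix. I would resolve this by induction over attempts within the inter-certification interval. Retries with $k=0$ leave $(s_{t_i}, P_i)$ unchanged, and the assumption is applied at each attempt (read as covering every environment call issued from $s_{t_i}$), so each failed attempt is again conditionally independent of the prefix. For replanning, the cleanest route is to read the proposition as conditioning on the plan tail in force when the certifying action is taken: $P_i$ is redefined at each replan, and the statement conditions on it. The influence of $\tau$ on the replanned tail is then absorbed into the conditioning variable $P_i$, and no further independence is needed. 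If the reading is instead that $P_i$ is fixed at time $t_i$, I would add the caveat that the identity holds on the event that no replan occurs before the next certification. I would also remark that the full claim requires \textsc{Replan}'s dependence on $\tau$ to factor through $(s_{t_i}, P_i)$. That is the honest boundary of the argument, consistent with the paper's description of \textsc{Replan} as the sole non-local operator.
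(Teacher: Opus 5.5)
Your proposal is correct and follows essentially the same route as the paper's sketch. \textsc{Realize} and \textsc{Validate} read only $(s_{t_i}, P_i)$ and the new observation, the environment assumption removes the prefix from $o_{t_i+1}$, and \textsc{Replan}'s dependence on $\tau$ is absorbed by letting the replanned tail play the role of $P_i$, exactly as the paper does. Your explicit handling of the retry loop, and your caveat that with $P_i$ frozen at time $t_i$ the identity would also need \textsc{Replan}'s use of $\tau$ to factor through $(s_{t_i}, P_i)$, are more careful than the paper's sketch but do not change the argument.
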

        
        \begin{proof}[Proof sketch]
        The next certified state $s_{t_{i+1}}$ depends only on $a_{t_i}$ and the value $k = \textsc{Validate}(P_i, o_{t_i + 1})$. By Definition~\ref{def:sdp}, $\textsc{Realize}(s_{t_i}, \hat{s}_{t_i + 1})$ reads only the current state and the next target, and \textsc{Validate} reads only the plan tail $P_i$ and the new observation. Neither consumes the prefix, and $o_{t_i+1}$ depends on the prefix only through $(s_{t_i}, P_i)$ by assumption. \textsc{Replan}, when invoked, receives $\tau$ only to produce a new plan tail $P_i'$, which then plays the role of $P_i$ for subsequent steps, and the same independence holds with $P_i'$ in place of $P_i$. The prefix therefore enters $s_{t_{i+1}}$ only through $(s_{t_i}, P_i)$.
        \end{proof}

        The Markov property here follows from how state is defined, given a mild conditional independence assumption on the environment, rather than from an empirical claim about the environment's full dynamics. SDP states are predicates the agent has certified to hold, not environment configurations whose latent dynamics might leak across step boundaries. Trajectories with the same certified state and remaining plan are indistinguishable for predicting the next certified state, which depends only on what the agent intends to certify next and the action it tries. Proposition~\ref{prop:markov} establishes exactly the conditional independence of Eq.~\ref{eq:markov}, with the pair $(s_{t_i}, P_i)$ playing the role of the Markov state.

    \section{Experiments}
    \label{sec:experiments}
    
    We evaluate SDP along two axes. First, whether organizing an agent around a state plan improves task performance (Section~\ref{sec:main-results}). Second, whether the certified trajectories support analyses unavailable to reactive agents (Section~\ref{sec:anatomy}). Section~\ref{sec:ablation} isolates the contribution of each loop-level mechanism via ablation. The five benchmarks span distinct combinations of environment structure and goal type, including constraint satisfaction over structured option sets (TravelPlanner~\citep{xie2024travelplanner}), open-ended web reasoning with free-form retrieval (AssistantBench~\citep{yoran2024assistantbench}), interactive scientific exploration in a text-based simulator (ScienceWorld~\citep{wang2022scienceworld}), and multi-hop question answering at varying chain depths (HotpotQA~\citep{yang2018hotpotqa}, MuSiQue~\citep{trivedi2022musique}). For each benchmark we compare SDP against the published baselines evaluated on it, retaining each baseline with the LLM reported in its original paper. We report benchmark-selection rationale and baseline fairness criteria, including LLM backbone matching and sample size alignment, in Appendices~\ref{app:benchmarks} and~\ref{app:implementation_details}.
    
    \subsection{Task performance across language environments}
        \label{sec:main-results} 

        \textbf{TravelPlanner.}
            Table~\ref{tab:travelplanner} reports results on TravelPlanner~\citep{xie2024travelplanner}. SDP's largest gain is on hard constraints, achieving 97.4\% Micro and 93.8\% Macro and exceeding ATLAS~\citep{choi2025atlas} by 14.8 and 19.4 points despite ATLAS using a larger backbone (Gemini-2.5-Pro). This advantage traces to SDP's predicate structure: each constraint (budget, accommodation, transportation) becomes a separate predicate that must be certified before the plan advances, so violations are caught at the step where they arise rather than discovered after assembly. Two failure modes of plan-as-text baselines reinforce this gap. In our reproduction runs, 12--18\% of outputs from ReAct, Plan-and-Solve, and Reflexion were malformed, and 20--30\% of feasible tasks suffered budget overflow from single-pass generation. SDP eliminates both, since each \textsc{Realize} call receives only constraint-satisfying options and the LLM selects among guaranteed-feasible candidates rather than checking constraints itself.

\begin{table*}[h]
\centering
\caption{Results on TravelPlanner~\cite{xie2024travelplanner}. Metrics are Delivery Rate, Commonsense constraint satisfaction (Micro/Macro), Hard Constraint satisfaction (Micro/Macro), and Final Pass Rate (\%).}
\label{tab:travelplanner}
\small
\setlength{\tabcolsep}{4pt}
\resizebox{0.80\textwidth}{!}{%
\begin{tabular}{@{}llcccccc@{}}
\toprule
& & & \multicolumn{2}{c}{Commonsense} & \multicolumn{2}{c}{Hard Constraint} & \\
\cmidrule(lr){4-5} \cmidrule(lr){6-7}
Algorithm & LLM & Delivery Rate & Micro & Macro & Micro & Macro & Final Pass Rate \\
\midrule
ReAct~\cite{yao2022react}                 & Gemini-3.1-flash-lite  & 95.0  & 78.6 & 16.1 & 83.8 & 61.7 & \phantom{0}7.2 \\
Reflexion~\cite{shinn2023reflexion}       & Gemini-3.1-flash-lite  & 100.0 & 80.5 & 26.7 & 77.9 & 55.6 & 12.8 \\
Plan-and-Solve~\cite{wang2023plan}        & Gemini-3.1-flash-lite  & \phantom{0}99.4  & 84.7 & 26.1 & 78.9 & 56.7 & 13.3 \\
MIRROR~\cite{guo2025mirror}               & GPT-4o                 & 100.0 & 73.2 & 13.9 & 27.6 & 13.3 & \phantom{0}2.2 \\
GoalAct~\cite{chen2025enhancing}          & Gemini-3.1-flash-lite  & \phantom{0}99.4  & 83.1 & 25.0 & 78.7 & 56.1 & 15.0 \\
TDP~\cite{li2026beyond}                   & Gemini-3.1-flash-lite  & \phantom{0}99.4  & 81.6 & 27.2 & 79.2 & 61.1 & 13.3 \\
EvoAgent~\cite{yuan2025evoagent}          & Gemini-2.5-Pro         & 100.0 & 78.1 & 23.9 & 57.9 & 40.6 & 12.2 \\
PMC~\cite{zhang2025planning}              & Gemini-2.5-Pro         & 100.0 & 78.7 & 30.6 & 43.3 & 37.2 & 23.3 \\
ATLAS~\cite{choi2025atlas}                & Gemini-2.5-Pro         & 100.0 & 88.5 & 48.3 & 82.6 & 74.4 & 44.4 \\
\midrule
\textbf{SDP (Ours)} & Gemini-3.1-flash-lite  & \textbf{100.0} & \textbf{95.2} & 61.7 & \textbf{97.4} & 93.8 & 61.7 \\
\textbf{SDP (Ours)} & GPT-4o                 & \textbf{100.0} & \textbf{95.6} & \textbf{65.6} & \textbf{97.4} & \textbf{93.9} & \textbf{65.6} \\
\bottomrule
\end{tabular}%
}
\end{table*}

        \textbf{AssistantBench.}
            Table~\ref{tab:assistantbench} reports results on AssistantBench~\citep{yoran2024assistantbench}. SDP achieves the highest overall accuracy, precision, and EM using only a search API and URL-level scraping, without browser rendering, JavaScript, or form interaction. Its predicate structure decomposes each constraint into a separate verification step, and roughly 40\% of certified steps required at least one rejection-and-retry cycle. A recurring pattern in multi-entity tasks is that propagating partial findings with uncertainty markers outperforms forcing complete results, since validation pressure for completeness drives the agent to fill gaps with plausible but ungrounded values. SDP does not lead on Hard-tier accuracy, where Magentic-One~\citep{fourney2024magentic} wins, because Hard-tier tasks require multi-page browsing sessions a search-only interface cannot replicate. Where this limitation does not apply, the advantage is decisive: on Easy-tier tasks SDP achieves 92.8\%, exceeding the next best by over 10 points, confirming that structured validation rather than richer tool access is the binding factor.

\begin{table*}[h]
\centering
\caption{Results on AssistantBench~\cite{yoran2024assistantbench}. We report overall Accuracy, Precision, Exact Match (EM), and Accuracy stratified by difficulty (Easy/Medium/Hard).}
\label{tab:assistantbench}
\small
\setlength{\tabcolsep}{5pt}
\resizebox{0.66\textwidth}{!}{%
\begin{tabular}{@{}llcccccc@{}}
\toprule
& & & & & \multicolumn{3}{c}{Accuracy by Difficulty} \\
\cmidrule(lr){6-8}
Method & LLM & Accuracy & Precision & EM & Easy & Medium & Hard \\
\midrule
Infogent~\cite{reddy2025infogent}            & GPT-4o & 14.5 & 20.4 & 5.5 & 63.9 & 19.3 & 8.4 \\
RALM-1S$\to$CB~\cite{trivedi2023interleaving}      & GPT-4T & 19.5 & 21.0 & 6.1 & 81.3 & 35.0 & 7.3 \\
CB-1S~\cite{press2023measuring}               & GPT-4T & 22.2 & 24.8 & 8.3 & 67.8 & 49.7 & 4.2 \\
SeeAct$\to$CB~\cite{zheng2024gpt}       & GPT-4T & 23.4 & 26.1 & 9.4 & 82.0 & 47.7 & 7.1 \\
SPA-CB~\cite{yoran2024assistantbench}        & GPT-4T & 25.2 & 27.5 & 9.9 & 80.7 & 42.7 & 12.4 \\
Magentic-One~\cite{fourney2024magentic}      & GPT-4o & 25.3 & 25.3 & 11.0 & 69.9 & 35.6 & \textbf{16.9} \\
ACP-Domain Agents~\cite{bhardwaj2025agent}   & GPT-4o & 28.3 & 30.0 & 11.0 & 67.8 & 48.5 & 15.5 \\
\midrule
\textbf{SDP (Ours)} & GPT-4o & \textbf{31.8} & \textbf{32.0} & \textbf{14.9} & \textbf{92.8} & \textbf{54.9} & 16.0 \\
\bottomrule
\end{tabular}%
}
\end{table*}

        \textbf{ScienceWorld.}
            Table~\ref{tab:scienceworld} reports results on ScienceWorld~\citep{wang2022scienceworld}. Among training-free methods on the standard 30-task GPT-4 protocol, SDP achieves the highest overall score (59.16), improving over the previous best (Plan-and-Act~\citep{erdogan2025plan}) by 11.3 points. The gap widens with task length. On Long tasks SDP reaches 50.41, leading Plan-and-Act by 15.6 points and Reflexion by over 20 points, consistent with the expectation that explicit state tracking matters more as the horizon grows. The predicate structure also yields plan compression. The LLM's plan for a task like ``boil water'' uses 7 predicates that subsume the oracle's 36-step action sequence while preserving causal ordering, a compression derived from the task description alone without in-context trajectory examples. Plan granularity proved more consequential than length. Overly abstract predicates confused the validator, while overly fine-grained ones inflated LLM calls without improving accuracy, and predicates pitched at a single observable state change performed most reliably. The dominant failure mode here is not planning or action selection but inaccurate validation, analyzed further in Section~\ref{sec:anatomy}.

\begin{table*}[h]
\centering
\vspace{-0.5em}

\begin{minipage}[t]{0.43\textwidth}
\vspace{0pt}
\centering
\captionof{table}{Results on ScienceWorld~\cite{wang2022scienceworld} across task lengths, reported as average score. All methods are training-free with GPT-4 on the standard 30-task protocol.}
\label{tab:scienceworld}
\vspace{-0.2em}

\footnotesize
\setlength{\tabcolsep}{2.8pt}
\renewcommand{\arraystretch}{1.17}

\begin{tabular*}{\linewidth}{@{\extracolsep{\fill}}lcccc@{}}
\toprule
Method & Short & Medium & Long & Overall \\
\midrule
SayCan~\cite{ahn2022can}                 & 43.83 & 36.58 & 23.65 & 33.82 \\
ReAct~\cite{yao2022react}                & 48.79 & 44.01 & 21.07 & 36.43 \\
CoT~\cite{wei2022chain}                  & 49.54 & 47.87 & 23.09 & 39.23 \\
Reflexion~\cite{shinn2023reflexion}      & 71.47 & 35.43 & 30.17 & 45.34 \\
EVOAGENT~\cite{yuan2025evoagent}         & 48.67 & 36.17 & 11.38 & 30.42 \\
Plan-and-Act~\cite{erdogan2025plan}      & 60.52 & 46.43 & 34.77 & 47.86 \\
\midrule
\textbf{SDP (Ours)}                      & \textbf{73.72} & \textbf{53.50} & \textbf{50.41} & \textbf{59.16} \\
\bottomrule
\end{tabular*}
\end{minipage}%
\hfill
\begin{minipage}[t]{0.535\textwidth}
\vspace{0pt}
\centering
\captionof{table}{Results on multi-hop QA in the fullwiki open-domain setting. Both benchmarks use BM25 retrieval over a 5M-paragraph Wikipedia corpus and 1,000 validation examples.}
\label{tab:multihop-qa}
\vspace{-0.2em}

\footnotesize
\setlength{\tabcolsep}{1.6pt}
\renewcommand{\arraystretch}{0.98}

\begin{tabular*}{\linewidth}{@{\extracolsep{\fill}}llcccc@{}}
\toprule
& & \multicolumn{2}{c}{HotpotQA~\cite{yang2018hotpotqa}} & \multicolumn{2}{c}{MuSiQue~\cite{trivedi2022musique}} \\
\cmidrule(lr){3-4} \cmidrule(lr){5-6}
Method & LLM & EM & F1 & EM & F1 \\
\midrule
RankZephyr~\cite{pradeep2023rankzephyr} & GPT-4o & 34.7 & 35.0 & \phantom{0}8.6 & 12.8 \\
RankZephyr + CoT~\cite{pradeep2023rankzephyr} & GPT-4o & 34.0 & 34.4 & \phantom{0}9.4 & 13.3 \\
RankGPT~\cite{sun2023chatgpt} & GPT-4o & 34.6 & 35.3 & \phantom{0}9.5 & 13.5 \\
SETR-CoT \& IRI~\cite{lee2025shifting} & GPT-4o & 39.2 & 40.5 & 12.3 & 16.9 \\
IRCoT~\cite{trivedi2023interleaving} & GPT-3 & 49.3 & 60.7 & 34.2 & 43.8 \\
PRISM~\cite{nahid2025prism} & GPT-4o & 54.2 & 67.0 & 31.2 & 41.8 \\
\midrule
\textbf{SDP} & GPT-4o & \textbf{58.3} & \textbf{67.2} & \textbf{41.4} & \textbf{51.9} \\
\bottomrule
\end{tabular*}
\end{minipage}

\vspace{-0.5em}
\end{table*}
    
        \textbf{Multi-hop question answering.}
            Table~\ref{tab:multihop-qa} reports results on HotpotQA~\citep{yang2018hotpotqa} and MuSiQue~\citep{trivedi2022musique}. SDP's advantage is most pronounced on MuSiQue, where chains reach three or four hops, consistent with the expectation that explicit state tracking matters more as the reasoning horizon grows. The primary strength here is that \textsc{Validate} blocks hallucinated intermediate findings before they propagate. In HotpotQA's distractor setting, where gold paragraphs are guaranteed alongside irrelevant ones, this is decisive: validation rejects findings from distractor passages and the agent re-attempts with the correct paragraph, while reactive baselines lack per-hop verification and are vulnerable to plausible but incorrect evidence that silently corrupts the chain. The converse is the primary challenge: when the retriever fails to surface relevant evidence, \textsc{Validate} rejects each unsupported finding but the loop has no better paragraph to try, exhausting its budget. Roughly 76\% of MuSiQue failures trace to this retrieval bottleneck rather than planning or validation errors. SDP achieves these results using BM25 alone, whereas the strongest baselines employ hybrid retrieval; integrating learned retrievers into \textsc{Realize} is a natural extension the framework accommodates.
            
    \subsection{Anatomy of SDP trajectories}
		\label{sec:anatomy}

        The previous results evaluated SDP as a prompting strategy. This section examines the artifact it produces. The certified trajectory carries structured information that a reactive trajectory does not, namely whether single actions advanced multiple predicates, where the agent stalls, how much progress was made before failure, and how well the validator's verdicts agree with ground truth.

        \begin{figure*}[t]
            \centering
            \includegraphics[width=0.90\textwidth]{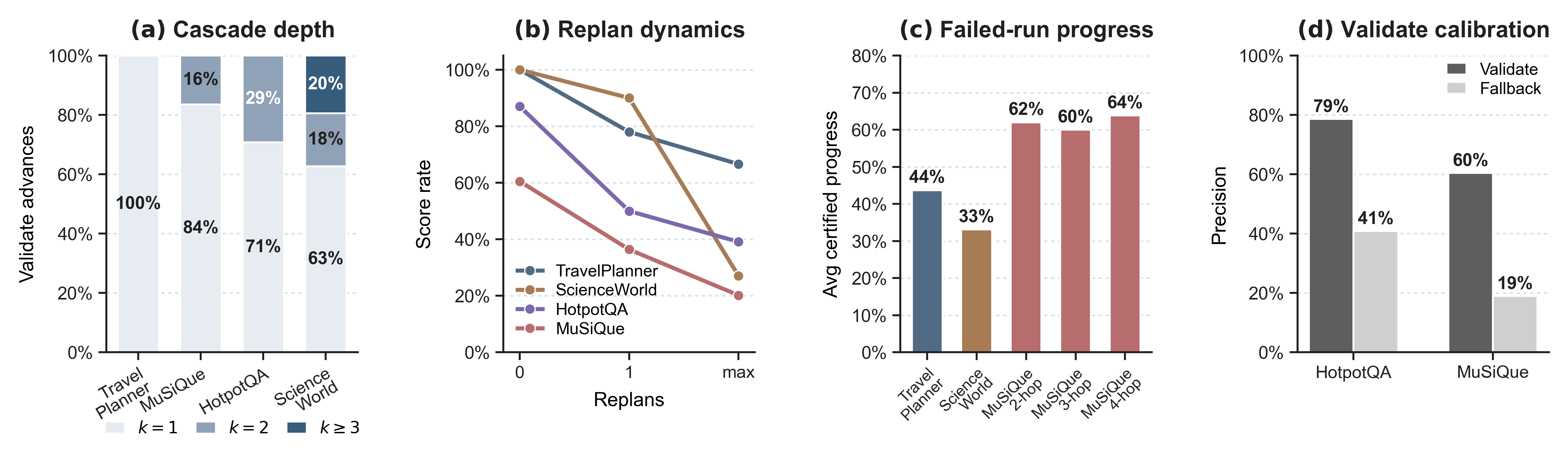}
            \vspace{-0.5em}
            \caption{Anatomy of SDP trajectories. \textbf{(a)} Cascade depth. \textbf{(b)} Score rate vs.\ replan count. \textbf{(c)} Certified progress in failed runs. \textbf{(d)} \textsc{Validate} calibration.}            
            \label{fig:anatomy}
            \vspace{-1.2em}
        \end{figure*}

        \textbf{Cascade and replan rates are environment properties (Figure~\ref{fig:anatomy}a,b).}
        	Whether a single action satisfies multiple predicates, and whether a failed plan is locally repairable, are environment-determined. Cascade ranges from 0\% on TravelPlanner, where each slot demands separate selection, to 37\% on ScienceWorld, where bundled sub-goals are common. The replan curve tracks environment recoverability. ScienceWorld holds full success through one replan, while TravelPlanner declines steadily as replanning rises, since replan cannot rescue infeasible option sets. This variation confirms SDP records these rates honestly rather than imposing a single pattern.
            
        \textbf{Where and how far runs fail (Figure~\ref{fig:anatomy}c).}
            Because every rejection attaches to a specific predicate, the trajectory records not only that a run failed but where and how far. TravelPlanner failures concentrate on hotel selection and return-transport; MuSiQue failures typically certify the first hop or two before stalling on a bridge entity. A reactive run is binary; an SDP run is a fraction. Failed TravelPlanner runs certify 44\% of their plan, and failed MuSiQue runs certify 60--64\% of their hops. The pattern is most revealing in MuSiQue, where certified hops grow with chain length, so longer chains fail farther in, surfacing more partial information rather than less.            
            
        \textbf{Calibration is auditable (Figure~\ref{fig:anatomy}d).}
        	Because \textsc{Validate} certifies the goal predicate as a discrete decision, every run yields a ground-truth check. The agreement between goal certification and a correct final answer is 79\% on HotpotQA and 60\% on MuSiQue. In both, runs that fall back to forced finalization show sharply lower precision (41\% and 19\% respectively), confirming that the certification signal carries information beyond LLM parametric guessing. The gap is not a hidden weakness but a measured one, since runs where \textsc{Validate} accepts the full chain but the answer is wrong form a small, identifiable subset whose patterns can guide tighter validator design.

    \subsection{Ablation study}
        \label{sec:ablation}
        
        The framework's three loop-level mechanisms, \textsc{Validate}, \textsc{Replan}, and cascade, are each removable. We replace \textsc{Validate} with always-pass, \textsc{Replan} with termination on budget exhaust, and cascade with a cap of $k \leq 1$. Scores are rescaled using conversion factors derived from each trajectory (action fidelity for \textsc{Validate}, certified-prefix ratio for \textsc{Replan}, both defined in Appendix~\ref{app:ablation}).  
        
        \begin{table*}[h]
        \centering
        \caption{\textbf{Ablation.} Effect of removing each loop-level mechanism.}
        \label{tab:ablation}
        \small
        \setlength{\tabcolsep}{6pt}
        \resizebox{0.90\textwidth}{!}{%
        \begin{tabular}{@{}lccccc@{}}
        \toprule
        & TravelPl. (Score) & SciWorld (AS) & AsstBench (EM) & HotpotQA (F1) & MuSiQue (F1) \\
        \midrule
        $-$\textsc{Validate} (always pass)        & 75.0 & 15.7 & 19.0 & 16.2 & 10.1 \\
        $-$\textsc{Replan} (terminate on budget)  & 74.7 & 27.5 & 25.0 & 49.6 & 36.6 \\
        $-$Cascade (force $k \leq 1$)             & 96.7 & 32.0 & 21.3 & 62.8 & 49.1 \\
        \midrule
        \textbf{Full SDP}                         & \textbf{96.7} & \textbf{59.2} & \textbf{31.8} & \textbf{67.2} & \textbf{51.9} \\
        \bottomrule
        \end{tabular}%
        }
        \end{table*}
                
        \textbf{\textsc{Validate} is the dominant contributor, except the environment substitutes for it.}
			Removing \textsc{Validate} produces the largest drop on four of five benchmarks, confirming it is the only mechanism catching uncertified findings before they propagate. TravelPlanner is the exception: its environment pre-filters candidates to exclude budget- and constraint-violating options, performing part of \textsc{Validate}'s role structurally. Without such a filter, \textsc{Validate} is the principal source of correctness.
        
        \textbf{\textsc{Replan} pays off in proportion to environment recoverability.}
        	\textsc{Replan} matters most where local re-attempts can recover progress and least where no alternative exists. ScienceWorld shows the strongest effect since most successful runs needed at least one replan to navigate an unexpected obstacle, while AssistantBench shows the weakest as nearly all tasks complete on the original plan. TravelPlanner presents a distinct pattern: the score drops not because replanning would have rescued the task, but because the certified prefix is shorter when the agent terminates on budget exhaustion.
            
        \textbf{Cascade is decisive only under tight attempt budgets.}
        	Cascade acts as a budget-multiplier, advancing the cursor by $k \geq 2$ and preventing budget exhaustion when actions bundle. ScienceWorld shows the largest effect without cascade (59.2 $\to$ 32.0), consistent with its 37\% cascade rate. On benchmarks where cascade is rare or the budget is generous, disabling it has little measurable impact.
        
    \section{Related work}
	\label{sec:related}

    \textbf{Reactive and reflective language agents.}
        The dominant paradigm maps observation history directly to action~\citep{yao2022react, schick2023toolformer}. Subsequent work adds verbal reflections~\citep{shinn2023reflexion, lippmann2025positive}, persistent memories~\citep{majumder2023clin}, or reusable skills~\citep{zhao2024expel, wang2023voyager}. These systems still share a structural gap: they do not construct states distinct from observations, certify transitions against those states, or produce trajectories whose Markov property can be stated. SDP instead produces a discrete certification at every step.

    \textbf{Action-planning agents.}
        A second line equips language agents with explicit planning, but plans over action sequences rather than state sequences~\citep{wang2023plan, yao2023tree, zhou2023language, li2026beyond, erdogan2025plan}. These approaches share the insight that committing to a plan before acting improves over reactive behavior, and SDP inherits it. The difference is what is planned. Action planners search $A^n$ for things to do, while SDP searches $\Sigma^n$ for conditions that should hold. Two consequences follow. First, plan entries are verifiable against observations, since predicates have truth values. Second, the plan is decoupled from the action space, so the same predicate chain can be realized by different actions without replanning.
        
    \textbf{World models and state abstraction.}
    	Several recent systems construct internal representations of the environment rather than raw observations, including temporal knowledge graphs~\citep{dinh2024reasonplanner}, free-form state descriptions~\citep{wang2023describe, sun2023adaplanner}, and feedback-triggered replanning~\citep{huang2022inner}. These systems share the intuition that an agent benefits from an explicit model of what is true in the world. The gap is in what follows from that model. In each case the internal representation is consumed by the action-producing module, with no per-step certification. SDP's four-operator decomposition enforces this separation architecturally rather than relying on the language model to maintain it implicitly.

    \textbf{Decision-theoretic formulations for language environments.}
        The standard approach posits an MDP or POMDP and learns or plans within it. This has been productive in dialogue systems~\citep{young2013pomdp, williams2007partially}, game playing~\citep{silver2017mastering}, and instruction following in gridworlds~\citep{branavan2009reinforcement}, where the state space is fixed in advance. Applying the same template to open-ended language environments requires fixing a state space and a transition kernel before the task is known---the obstacle of Section~\ref{sec:missing-inputs}. SDP sidesteps this by not assuming a pre-existing MDP. The agent constructs the state space at runtime, predicate by predicate, and the certified trajectory is the MDP, populated by the task rather than posited.            
    \section{Discussion}
	\label{sec:discussion}

	\textbf{Limitations.}
        SDP inherits the limits of its LLM-based operators. \textsc{Validate} can produce false positives on superficially matching observations, so the certified-state guarantee is only as strong as its soundness; Section~\ref{sec:anatomy} quantifies this through calibration. \textsc{Propose} similarly bounds plan quality by the LLM's ability to decompose goals into reachable predicates, and \textsc{Replan} cannot fully recover from consistently unreachable targets. Natural-language predicates also restrict which conditions can become states, especially for continuous quantities or conditions the LLM cannot articulate. Finally, SDP uses more LLM calls per environment step than reactive baselines.

    \textbf{What the framework makes possible.}
        In classical settings, MDP structure is valuable because it makes a class of operations well-defined, not because it enables one specific algorithm. SDP brings three such operations to language agents: per-predicate credit assignment through the integer $k$ from \textsc{Validate}, modular operator replacement through the fixed interfaces of Definition~\ref{def:sdp}, and formal progress measurement through the certified trajectory (Section~\ref{sec:anatomy}). Follow-up work can extend each direction by training a state generator for \textsc{Propose}, learning an action policy over $\Sigma$ for \textsc{Realize}, or applying offline RL to certified tuples $(s_t, a_t, k, s_{t+k})$. The present paper provides the structural foundation, and the fixed interfaces make these extensions natural.
    \section{Conclusion}
    \label{sec:conclusion}
    
    Language environments provide none of the runtime structure MDP analysis requires. No state space, no observation-to-state mapping, no certified transitions, and no termination criterion. SDP closes this gap by having the agent construct all four, predicate by predicate, producing a certified trajectory that carries exactly the objects downstream analysis presupposes. Experiments on five benchmarks confirm this structure improves task performance, with the advantage widening as the horizon grows, and supports analyses unavailable to reactive agents. SDP operates on the formal MDP and rigorously certifies each predicate, bringing decision-theoretic rigor to language agents.

    \section*{Acknowledgement}
        This work was supported in part by the DARPA Young Faculty Award, the National Science Foundation (NSF) under Grants \#2127780, \#2319198, \#2321840, \#2312517, and \#2235472, the Semiconductor Research Corporation (SRC), the Office of Naval Research through the Young Investigator Program Award, and Grants \#N00014-21-1-2225 and N00014-24-1-2547, Army Research Office Grant \#W911NF2410360. Additionally, support was provided by the Air Force Office of Scientific Research under Award \#FA9550-22-1-0253.
    {
        \small
        \bibliographystyle{plain}
        \bibliography{ref.bib}
    }    
    \newpage
    \appendix
    \section{Implementation details}
    \label{app:implementation_details}

    \subsection{Common Settings}
        \label{app:impl-common}
        
        All five benchmarks share the same SDP core loop; only the adapter layer differs per benchmark. This subsection describes the shared infrastructure.
        
        \paragraph{Core loop and adapter protocol.}    
            The SDP core executes a single algorithm regardless of benchmark: given a task, it calls the adapter's \textsc{Propose} to obtain an initial predicate plan, then iterates through the plan by calling \textsc{Realize} to produce an action for the current predicate and \textsc{Validate} to certify whether the action satisfies it. If validation succeeds with cascade count $k \geq 1$, the first $k$ predicates are removed from the plan and the state is updated. If validation fails, the attempt is recorded and \textsc{Realize} is retried up to the attempt budget. If the budget is exhausted, \textsc{Replan} generates a new plan tail from the stuck predicate onward. The loop terminates when the goal predicate is certified or the global step cap is reached.
        
        \paragraph{Attempt history and failure propagation.}
            Failed attempts are recorded in an append-only history that is threaded through subsequent \textsc{Realize} and \textsc{Replan} calls. Each entry contains the target predicate, the attempted action, the validation outcome, and the rejection reason. This history is surfaced in the LLM prompt so that the model avoids repeating previously failed choices. Crucially, failed attempts do not modify the certified state: only successful validations produce state updates. This separation preserves the MDP property that the state at any point reflects exactly the set of certified decisions.
        
        \paragraph{Validation modes.}
            SDP does not prescribe how validation is implemented; the framework supports both deterministic and LLM-based validators depending on the domain. In practice, we observe a spectrum across the five benchmarks. TravelPlanner uses fully deterministic validators (Python constraint checks over typed option objects), with no LLM involvement in validation at all. HotpotQA and MuSiQue use LLM-based verification (checking whether a finding is supported by a cited paragraph) combined with deterministic rejection filters (negation phrases, garbage answers, type mismatches). AssistantBench uses a three-phase pipeline (tool execution, LLM-based finding extraction, LLM-based verdict). ScienceWorld delegates validation entirely to the LLM after three hard-coded shortcuts (task completion, invalid action, goal-at-head). The choice of validation mode is an adapter-level design decision driven by domain structure: when constraints are formally specifiable (as in TravelPlanner), deterministic validation eliminates noise; when success criteria are semantic (as in ScienceWorld), LLM judgment is necessary.
        
        \paragraph{LLM backbone and inference.}
            Unless otherwise noted, all experiments use a single LLM backbone per experimental condition (the same model for \textsc{Propose}, \textsc{Realize}, \textsc{Validate}, and \textsc{Replan}). Temperature is set to 0 for all calls except where explicitly stated (AssistantBench finalize retries use temperature 0.3). All LLM outputs are requested in JSON format and parsed with a tolerant parser that strips code fences, attempts structured extraction, and falls back to regex-based field extraction. No few-shot examples are provided in any prompt; all prompts are zero-shot with detailed natural-language instructions.
        
    \subsection{TravelPlanner}
        \paragraph{State and predicate plan.}        
            The certified state $s_t$ contains task constants (origin, destination, days, travelers, budget, local constraints), a deterministically extracted trip topology (visited-city sequence, day-to-city mapping, inter-city travel days), and accumulated decisions (chosen transports, accommodations, per-day meals and attractions, running total cost, used restaurant/attraction names, collected cuisine types). All updates are immutable and the running cost is maintained as an invariant computed identically to the official evaluator's cost arithmetic.
        
            The state plan is a deterministic function of the task shape and city sequence. Eight predicate kinds are defined: (1)~outbound transport, (2)~accommodation (one per visited city), (3)~inter-city transport (one per consecutive city pair), (4)~day meals (one per day, covering all three slots), (5)~day attractions (one per day), (6)~return transport, (7)~budget check (aggregate: total cost $\leq$ budget and cuisine coverage), and (8)~render (assembly into official JSON format). The total count is $1 + n + (n{-}1) + 2N + 3$, yielding 11 predicates for 3d/1c, 17 for 5d/2c, and 23 for 7d/3c. Predicates are ordered so that cost-incurring global decisions precede per-day content, which precedes aggregate checks, ensuring that the remaining budget is well-defined at each filtering step.
        
        \paragraph{Constraint-upfront option filtering.}
            Before each \textsc{Realize} call, the adapter constructs the set of options that already satisfy every constraint the official evaluator will check. Five filters are applied in sequence: (1)~sandbox membership (the option appears in the reference data), (2)~local-constraint satisfaction (transport mode, room type, house rules), (3)~transport-mode consistency (self-driving must not co-occur with flight or taxi across the trip), (4)~trip-wide uniqueness (restaurants and attractions used on earlier days are excluded), and (5)~affordability (option cost plus running total $\leq$ budget). The filtered set is sorted cheapest-first and capped at 30 options. Because every presented option is guaranteed valid, any LLM selection passes validation, eliminating both malformed-output and constraint-violation failures.
            
            For meal predicates, the three slots are decided sequentially within a single predicate: after each slot, the state is updated so that the next slot's options exclude the just-chosen restaurant and reflect the updated budget. When the option set is empty, a diagnostic identifies which filter stage eliminated all candidates and surfaces this to the replan mechanism.
        
        \paragraph{Validation, replan, and hyperparameters.}
            All validators are deterministic and receive typed selections rather than raw LLM output; parsing is confined to the \textsc{Realize} step. The budget-check predicate is the only cross-predicate validator, confirming structural completeness, total cost, and cuisine coverage. Replan supports retry only: rollback to earlier predicates is not available because state updates are additive and not reversible. The attempt history is surfaced in the retry prompt so that the LLM avoids repeating failed choices.
            
            \begin{table}[h]
            \centering
            \small
            \caption{SDP configuration for TravelPlanner.}
            \label{tab:tp-hyperparams}
            \begin{tabular}{lrl}
            \toprule
            Parameter & Value & Description \\
            \midrule
            Attempt budget & 3 & Max \textsc{Realize} attempts per predicate before replan \\
            Max replans & 5 & Max replan cycles per predicate \\
            Global step cap & 60 & Max total steps across the episode \\
            Max options shown & 30 & Max options presented to the LLM per call \\
            \bottomrule
            \end{tabular}
            \end{table}

    \subsection{ScienceWorld}
        \label{app:impl-scienceworld}
        
        ScienceWorld~\citep{wang2022scienceworld} is an interactive text-based environment in which the agent performs multi-step science experiments (e.g., testing conductivity, boiling substances, growing plants). Tasks range from 5 to 50+ oracle steps, with partial-credit scoring that is milestone-based rather than monotonically increasing. The environment accepts free-form text actions and rejects unrecognized inputs with a fixed error string.
        
        ScienceWorld's instantiation of SDP differs from TravelPlanner in every component. Where TravelPlanner uses deterministic predicate plans and pre-filtered option sets, ScienceWorld requires LLM-generated plans, free-form action generation, and LLM-based validation. This contrast demonstrates that the SDP framework accommodates both structured-selection and open-ended-generation settings without changes to the core loop.

        \paragraph{State and predicate plan.}
            The state $s_t$ caches the latest environment observation, the full room description (which persists across actions regardless of the last command), the agent's inventory, the current score, a list of objects the agent has focused on, and a rolling action history. Unlike TravelPlanner, state updates are not purely additive: the environment itself is mutable, and the state snapshot reflects whatever the simulator reports after each action.
            
            All predicates are free-form natural-language condition assertions generated by the LLM at the start of each episode. A single unified predicate type replaces the eight-kind taxonomy (NAVIGATE, ACQUIRE, FOCUS, etc.) used in an earlier version of the adapter, which proved brittle: adding a new benchmark would have required defining a new kind set, new dispatch handlers, and new validation heuristics. Under the current design, the LLM produces declarative sub-goals such as ``The agent's inventory contains a metal pot'' or ``The water has been heated to a boil,'' and validation is delegated entirely to a separate LLM call. The only special-cased predicate is the terminal goal (``task score reaches 100''), which triggers a hard shortcut when the environment reports completion.
            
            A grounding mechanism prevents the LLM from hallucinating environment-specific names. The \textsc{Propose} prompt receives an explicit list of reachable rooms parsed from the simulator's valid-action set, and the LLM is instructed to reference only rooms from this list. During replanning, a cumulative list of targets that the engine has previously rejected is appended to the prompt so that the LLM avoids repeating proven-invalid references.
        
        \paragraph{Free-form action generation and LLM-based validation.}
            \textsc{Realize} issues a single LLM call that receives the current room description (the simulator's persistent state view, not the terse last-action response), inventory, the target predicate's description, recent action history, and the set of rooms already visited. The LLM returns a free-form action string; if the simulator rejects it, the attempt is recorded as a failure and the target is retried.
            
            \textsc{Validate} executes the action in the environment, then applies three hard shortcuts before resorting to an LLM call: (1)~if the environment reports task completion (done flag or score $\geq 100$), all remaining predicates are certified; (2)~if the engine rejects the action, the attempt fails with $k=0$ and the rejected target is cached for future replan context; (3)~if the goal predicate is at the head of the plan but the task is not complete, $k=0$. In all other cases, the LLM receives the target predicate description, the action, the resulting observation, the score delta, and a flag indicating whether the agent entered a previously unvisited room. The LLM returns a cascade count $k$ (how many consecutive head predicates are now satisfied) and a reason string. The cascade count is capped so that it cannot advance past natural-language predicates into the goal predicate.
            
            The visited-room flag addresses exploration sub-goals: predicates of the form ``The location of X is known to the agent'' are credited with $k \geq 1$ when the agent enters a new room, even if the target object has not yet been directly observed. Without this relaxation, exploration predicates became unreachable whenever the target was hidden inside a closed container in an unvisited room.
        
        \paragraph{Replan and hyperparameters.}
            Replanning reuses the \textsc{Propose} prompt with two additions: a context block describing the stuck predicate, the number of failed attempts, and the most recent action--failure pairs; and the cumulative invalid-targets list. The LLM generates a fresh sub-goal sequence from the current state.
            
            \begin{table}[h]
            \centering
            \small
            \caption{SDP configuration for ScienceWorld.}
            \label{tab:sw-hyperparams}
            \begin{tabular}{lrl}
            \toprule
            Parameter & Value & Description \\
            \midrule
            Attempt budget & 30 & Max \textsc{Realize} attempts per predicate before replan \\
            Max replans & 5 & Max replan cycles per predicate \\
            Global step cap & 500 & Max total steps across the episode \\
            \bottomrule
            \end{tabular}
            \end{table}
            
            The attempt budget is substantially higher than TravelPlanner's (30 vs.\ 3) because ScienceWorld actions are free-generated and frequently rejected by the engine, requiring more retries per predicate. The global step cap of 500 accommodates long tasks whose oracle trajectories already exceed 40 actions.
    
    \subsection{AssistantBench}
        \label{app:impl-assistantbench}
        
        AssistantBench~\citep{yoran2024assistantbench} is an open-ended web-information QA benchmark: given a natural-language question (e.g., ``List all gyms within half a mile of Tompkins Square Park''), the agent must search the web, scrape pages, and synthesize a final answer. Unlike ScienceWorld, there is no interactive environment---the agent orchestrates external tool calls (web search and page scraping) and submits a single answer.
        
        AssistantBench instantiates SDP as a \emph{narrowing pipeline}: the LLM decomposes the question into a sequence of evidence-gathering and reasoning steps, each of which is certified before the next begins. This design prevents a common failure mode of single-pass QA agents, in which partial evidence is silently dropped and the final answer omits qualifying candidates.
                
        \paragraph{State and predicate plan.}
            The state $s_t$ accumulates raw evidence and validated conclusions across the episode. It contains a search cache (mapping query strings to search-engine responses), a scrape cache (mapping URLs to page content), a list of certified step findings (each recording the step's goal, the evidence it produced, and its validated conclusion), a domain blacklist (sites that consistently fail scraping), and the current final answer.
        
            The predicate plan is generated by a \textsc{Decompose} call that breaks the question into up to five \emph{narrowing steps}, each labeled with one of four kinds: \emph{collect} (gather raw candidates), \emph{filter} (narrow an existing set by a criterion), \emph{extract} (pull a specific fact from known evidence), or \emph{compute} (derive the answer by calculation or comparison). After the step predicates, a \emph{finalize} predicate synthesizes the answer from accumulated findings, and a \emph{goal} predicate submits it.
        
        \paragraph{Tool-augmented realize and three-phase validation.}
            \textsc{Realize} for each step predicate consists of a planning LLM call that, given the question, prior certified findings, accumulated evidence, and previous failures on this step, produces a short list of tool actions (web searches and page scrapes). The adapter executes these actions via the Serper API, which provides both a search endpoint (returning ranked results with snippets) and a scraping endpoint (returning full-page markdown). Search results automatically trigger scraping of the top-ranked URL unless its domain appears on the blacklist. Domains are blacklisted at runtime when scraping fails (timeout, access denial, or content shorter than 100 characters), preventing repeated attempts against consistently unreachable sites such as TripAdvisor or Yelp.
        
            When all tool actions return only cached results (no fresh evidence), a fallback mechanism invokes GPT-4o with web-search to gather additional data. This fallback also fires after two consecutive validation failures on the same step, providing a qualitatively different evidence source.
            
            Validation proceeds in three phases. First, the tool actions are executed and their results are appended to the state's evidence caches. Second, a \textsc{Finding} LLM call reads the step's goal, the fresh evidence, and all prior certified findings, and produces a natural-language conclusion for this step. Third, a \textsc{Validate} LLM call judges whether the finding adequately addresses the step goal, returning a pass/fail verdict with a reason. If validation fails, the step's failure counter (maintained on the original state to persist across attempts) is incremented; after two consecutive rejections, the step is force-passed if the finding contains any substantive content, on the principle that partial data scores higher than an infinite retry loop.
        
        \paragraph{Replan, finalize, and hyperparameters.}
            Replanning generates new search queries via an LLM call that receives the stuck predicate, a diagnosis of what has been tried, and a strategy-shift prompt. It also proposes scraping of URLs from existing search results that have not yet been fetched, subject to domain-blacklist filtering. The original step predicates are re-added to give them another attempt with the newly gathered evidence.
            
            The finalize predicate synthesizes the final answer from all accumulated evidence. If the answer is weak (empty, null, or a refusal string), the adapter retries with an elevated temperature and an explicit instruction to return closest matches. After three weak attempts, a force-accept mechanism produces the best available answer rather than returning nothing.
            
            \begin{table}[h]
            \centering
            \small
            \caption{SDP configuration for AssistantBench.}
            \label{tab:ab-hyperparams}
            \begin{tabular}{lrl}
            \toprule
            Parameter & Value & Description \\
            \midrule
            Attempt budget & 3 & Max \textsc{Realize} attempts per predicate before replan \\
            Max replans & 2 & Max replan cycles per predicate \\
            Global step cap & 30 & Max total steps across the episode \\
            Max subqueries & 5 & Max decomposition steps per question \\
            \bottomrule
            \end{tabular}
            \end{table}
            
            The conservative step cap reflects that each step involves multiple external API calls (search plus auto-scrape plus potential fallback), making per-step cost substantially higher than in the interactive benchmarks.

    \subsection{HotpotQA}
        \label{app:impl-hotpotqa}
        
        HotpotQA~\citep{yang2018hotpotqa} is a multi-hop question answering benchmark in which each question requires reasoning over two Wikipedia paragraphs. We evaluate under two settings: \emph{distractor}, where ten paragraphs (two gold, eight distractors) are provided with the question, and \emph{open-domain}, where the agent retrieves evidence from a BM25 index over the full Wikipedia corpus.
        
        \paragraph{State, predicate plan, and retrieval.}
            The state $s_t$ contains the question, all available paragraphs (provided in distractor mode or accumulated via retrieval in open-domain mode), a list of certified hop findings (each recording the sub-question, the supporting paragraph title, the extracted finding, and a bridge entity for chaining), a search cache mapping query strings to retrieved paragraphs, and the current final answer.
            
            The predicate plan follows a fixed four-predicate structure: two \emph{hop} predicates (one per sub-question), an \emph{answer} predicate that synthesizes the final response from certified findings, and a \emph{goal} predicate. The two sub-questions are generated by an LLM decomposition call at the start of each episode, and the bridge entity extracted from the first hop is available to the second hop's realize call, enabling the characteristic two-hop reasoning chain.
            
            In the open-domain setting, \textsc{Realize} for each hop first generates a search query via a dedicated LLM call that sees the original question, the current sub-question, prior findings, and previously tried queries. The query is executed against a BM25 index built over approximately 5.2 million Wikipedia abstract articles, with title boosting: a two-stage retrieval first matches query n-grams against paragraph titles (with prefix matching to handle Wikipedia disambiguation suffixes such as ``(film)'' or ``(album)''), then falls back to standard BM25 content scoring. Retrieved paragraphs accumulate in the search cache across hops, so the second hop can draw on evidence gathered during the first.
        
        \paragraph{Two-stage validation with paragraph fallback.}
            \textsc{Validate} for hop predicates applies a two-stage verification process to address the most frequent failure mode: the LLM cites the correct finding but attributes it to a wrong or generic paragraph title. In the primary stage, the adapter resolves the cited title against available paragraphs (exact match, then substring match, then search-cache lookup) and asks an LLM verifier whether the finding is supported by the resolved paragraph's text and correctly answers the sub-question. Findings that contain explicit negation phrases (``does not provide,'' ``not mentioned,'' ``not specified'') are rejected before the LLM call.
            
            If the primary stage fails---either because the cited title cannot be resolved or because the LLM verifier rejects the attribution---a fallback stage scans all accumulated paragraphs for one that supports the finding. Candidate paragraphs are ranked by keyword overlap with the finding (extracting content words of four or more characters, excluding stopwords, and counting matches in each paragraph's text). The top candidate is then submitted to the same LLM verifier. If confirmed, the finding is accepted with the corrected title. This fallback recovers cases where the LLM extracted the right fact from the right paragraph but hallucinated the paragraph's title.
            
            Answer-predicate validation rejects garbage responses (``unknown,'' ``not available,'' ``cannot determine'') and, when the answer predicate is immediately followed by the goal predicate, certifies both in a single cascade ($k = 2$). Replanning re-decomposes the question with failure context while preserving any hops already certified, so only the remaining sub-questions are regenerated.
        
        \paragraph{Hyperparameters.}
            \begin{table}[h]
            \centering
            \small
            \caption{SDP configuration for HotpotQA.}
            \label{tab:hqa-hyperparams}
            \begin{tabular}{lrl}
            \toprule
            Parameter & Value & Description \\
            \midrule
            Attempt budget & 3 & Max \textsc{Realize} attempts per predicate before replan \\
            Max replans & 2 & Max replan cycles per predicate \\
            Global step cap & 20 & Max total steps across the episode \\
            BM25 top-$k$ & 10 & Retrieved paragraphs per search query (open-domain) \\
            \bottomrule
            \end{tabular}
            \end{table}
            
            The low global step cap reflects the fixed plan length (four predicates): most episodes complete in 4--8 realize calls, with additional steps consumed only by retries on validation failure or replan-triggered re-decomposition.

    \subsection{MuSiQue}
        \label{app:impl-musique}
        
        MuSiQue~\citep{trivedi2022musique} extends multi-hop QA to 2--4 hop compositional chains. The SDP adapter shares the same architecture as HotpotQA (Section~\ref{app:impl-hotpotqa})---LLM-based decomposition, hop-by-hop evidence extraction with bridge-entity chaining, LLM-based hop validation, and BM25 retrieval for the open-domain setting---with three adaptations.
        
        First, the predicate plan is dynamic: the hop count is parsed from each task's identifier, yielding plans of 4--6 predicates (2--4 hops, plus answer and goal). The decomposition prompt requests the corresponding number of sub-questions, and replanning preserves already-certified hops and regenerates only the remaining tail.
        
        Second, the open-domain retriever operates over a corpus of approximately 84,000 unique paragraphs collected from all MuSiQue splits following the IRCoT protocol~\citep{trivedi2023interleaving}, rather than the 5.2-million-article Wikipedia corpus used for HotpotQA. The retriever uses the same BM25 architecture with title boosting and prefix matching. To handle the longer reasoning chains, the adapter employs escalating search diversity: after four unsuccessful queries, it falls back to entity-only queries extracted from the sub-question; after six, it asks the LLM to guess the answer and searches for the guess directly.
        
        Third, answer validation is stricter to match MuSiQue's short-span answer format. Beyond the garbage-answer rejection shared with HotpotQA, the adapter rejects self-referential answers (entities already present in the question), intermediate bridge entities that are not the final answer, answers exceeding 15 words, and question--answer type mismatches (e.g., a non-numeric answer to a ``when'' question). A final LLM sanity check verifies that the proposed answer is consistent with the certified hop chain before acceptance.
        
        \begin{table}[h]
        \centering
        \small
        \caption{SDP configuration for MuSiQue.}
        \label{tab:mq-hyperparams}
        \begin{tabular}{lrl}
        \toprule
        Parameter & Value & Description \\
        \midrule
        Attempt budget & 3 & Max \textsc{Realize} attempts per predicate before replan \\
        Max replans & 2 & Max replan cycles per predicate \\
        Global step cap & 25 & Max total steps across the episode \\
        BM25 top-$k$ & 10 & Retrieved paragraphs per search query (open-domain) \\
        Corpus size & ${\sim}$84K & Unique paragraphs from train + validation splits \\
        \bottomrule
        \end{tabular}
        \end{table}
        
        The global step cap is set slightly higher than HotpotQA's (25 vs.\ 20) to accommodate 3- and 4-hop tasks, which require proportionally more realize calls.
    \section{Benchmarks}
    \label{app:benchmarks}

    \subsection{TravelPlanner}        
        \paragraph{Benchmark selection.}
        	TravelPlanner~\citep{xie2024travelplanner} is a constraint-satisfaction benchmark in which an agent must assemble a multi-day travel itinerary that jointly satisfies hard constraints (budget, transportation mode, accommodation type) and commonsense constraints (meal variety, attraction count). We selected it because these constraints map directly onto SDP predicates. Each constraint becomes a separate predicate that must be certified before the plan advances, making TravelPlanner a natural testbed for whether per-predicate certification prevents the constraint violations that dominate plan-as-text baselines. The benchmark spans multiple task scales (3-day/1-city through 7-day/3-city), allowing us to observe how SDP behaves as plan complexity grows.
        
        \paragraph{Evaluation protocol.}
        	We evaluate on all 180 validation tasks using the tool-use setting, in which the agent queries search APIs (flights, hotels, restaurants, attractions, ground transport) to discover options before assembling the itinerary. Evaluation follows the official TravelPlanner scoring pipeline, reporting Delivery Rate, Commonsense constraint satisfaction (Micro/Macro), Hard Constraint satisfaction (Micro/Macro), and Final Pass Rate.
        
        \paragraph{Baseline selection.}
        	We include nine baselines spanning four categories. ReAct~\citep{yao2022react}, Reflexion~\citep{shinn2023reflexion}, Plan-and-Solve~\citep{wang2023plan}, GoalAct~\citep{chen2025enhancing}, and TDP~\citep{li2026beyond} are reproduced by us using Gemini-3.1-flash-lite on all 180 tasks with the same evaluation pipeline. MIRROR~\citep{guo2025mirror} (GPT-4o), EvoAgent~\citep{yuan2025evoagent}, PMC~\citep{zhang2025planning}, and ATLAS~\citep{choi2025atlas} (all Gemini-2.5-Pro) report numbers from their original papers on the same 180-task validation split and official scoring. SDP is evaluated with both Gemini-3.1-flash-lite and GPT-4o to enable fair comparison at both scales. The Gemini-3.1-flash-lite runs allow direct comparison against the five reproduced baselines under identical conditions, while the GPT-4o runs allow comparison against MIRROR at the same backbone scale.

    \subsection{ScienceWorld}        
        \paragraph{Benchmark selection.}
        	ScienceWorld~\citep{wang2022scienceworld} is an interactive text-based simulator in which an agent must perform scientific experiments by navigating rooms, manipulating objects, and reasoning about physical processes. We selected it because its tasks are inherently long-horizon (oracle action sequences reach 36 steps) and require strict causal ordering, making it a natural testbed for whether SDP's per-predicate certification and replan mechanism improve performance as the horizon grows. The benchmark groups its 30 tasks into Short, Medium, and Long categories, enabling direct measurement of how each method scales with task length.
        
        \paragraph{Evaluation protocol.}
        	We follow the standard 30-task evaluation protocol established by SwiftSage~\citep{lin2023swiftsage} and adopted by subsequent training-free methods. All methods use GPT-4 as the backbone. Performance is reported as average score across Short, Medium, Long, and Overall.
        
        \paragraph{Baseline selection.}
        	We include six training-free baselines. SayCan~\citep{ahn2022can}, ReAct~\citep{yao2022react}, CoT~\citep{wei2022chain}, Reflexion~\citep{shinn2023reflexion}, EVOAGENT~\citep{yuan2025evoagent}, and Plan-and-Act~\citep{erdogan2025plan} all report results using GPT-4 on the same 30-task protocol. Numbers are taken from their original papers. CoT, Plan-and-Act report only Overall scores. We do not include TDP~\citep{li2026beyond} or RPMS in the table because their evaluation protocols differ from the standard 30-task setup, making direct comparison unreliable.

    \subsection{AssistantBench}
        \paragraph{Benchmark selection.}
        	AssistantBench~\citep{yoran2024assistantbench} is an open-ended web reasoning benchmark consisting of 214 tasks that require retrieving, filtering, and integrating information from multiple web sources. We selected it because a large class of its tasks involves filtering entities through several independent constraints, a structure that maps naturally onto SDP predicates. Each constraint becomes a separate verification step, and \textsc{Validate} blocks ungrounded findings before they propagate to subsequent steps. The benchmark also stratifies tasks by difficulty (Easy/Medium/Hard), enabling fine-grained analysis of where SDP's advantage concentrates and where tool limitations become binding.
        
        \paragraph{Evaluation protocol.}
        	We evaluate on the full test split (181 tasks) via submission to the official HuggingFace leaderboard. The agent uses Serper search API and URL-level scraping as its tool interface, without browser rendering, JavaScript execution, or form interaction. Scoring follows the official AssistantBench metrics, reporting Accuracy, Precision, Exact Match (EM), and Accuracy stratified by difficulty.
        
        \paragraph{Baseline selection.}
        	We include seven baselines from the official AssistantBench leaderboard. Infogent~\citep{reddy2025infogent}, Magentic-One~\citep{fourney2024magentic}, and ACP-Domain Agents~\citep{bhardwaj2025agent} use GPT-4o. RALM-1S$\to$CB~\citep{trivedi2023interleaving}, CB-1S~\citep{press2023measuring}, SeeAct$\to$CB~\citep{zheng2024gpt}, and SPA-CB~\citep{yoran2024assistantbench} use GPT-4-Turbo. All numbers are taken from the official leaderboard under the same test split and scoring pipeline. SDP uses GPT-4o, enabling direct comparison against the three GPT-4o baselines. The tool interface differs across methods. Magentic-One employs a dedicated browser-controlling sub-agent, while SDP relies solely on search API and scraping. This difference is most visible on Hard-tier tasks, where multi-page browsing sessions are required.

    \subsection{HotpotQA and MuSiQue}        
        \paragraph{Benchmark selection.}
        	HotpotQA~\citep{yang2018hotpotqa} and MuSiQue~\citep{trivedi2022musique} are multi-hop question answering benchmarks that require chaining evidence across multiple Wikipedia paragraphs. We selected them because multi-hop reasoning creates a natural chain of predicates, one per hop, where each hop's finding feeds into the next. This structure makes \textsc{Validate}'s per-hop verification directly consequential. HotpotQA requires two hops, while MuSiQue extends to two, three, and four hops, enabling measurement of how SDP scales with reasoning depth. We use the answerable subset of MuSiQue as defined by~\cite{trivedi2022musique}.
        
        \paragraph{Evaluation protocol.}
        	Both benchmarks are evaluated in the fullwiki (open-domain) setting. The agent retrieves evidence from a 5M-paragraph Wikipedia corpus using BM25 retrieval, matching the setup of IRCoT~\citep{trivedi2023interleaving} and PRISM~\citep{nahid2025prism}. We evaluate on 1,000 validation examples for each benchmark. Performance is reported as Exact Match (EM) and token-level F1. SDP uses GPT-4o as the backbone.
        
        \paragraph{Baseline selection.}
        	We include six baselines from three families. RankZephyr, RankZephyr + CoT~\citep{pradeep2023rankzephyr}, and RankGPT~\citep{sun2023chatgpt} represent retrieve-then-read approaches with reranking. SETR-CoT \& IRI~\citep{lee2025shifting} extends this with iterative retrieval. IRCoT~\citep{trivedi2023interleaving} interleaves retrieval with chain-of-thought reasoning. PRISM~\citep{nahid2025prism} uses agentic retrieval with multi-step planning. All numbers are taken from their original papers. IRCoT uses GPT-3 while all other baselines and SDP use GPT-4o. SDP uses BM25 retrieval alone, whereas PRISM employs hybrid retrieval (BM25 + learned reranker), making the comparison conservative for SDP.
    \section{Ablation Details}
    \label{app:ablation}
    
    The ablation estimates in Table~\ref{tab:ablation} are computed by replaying recorded trajectories under counterfactual rules rather than re-executing the environment. Each counterfactual modifies one mechanism and rescales the observed score using a conversion factor derived from the trajectory itself.
    
    \paragraph{Action fidelity (for $-$\textsc{Validate}).}
    	When \textsc{Validate} is replaced by always-pass, every proposed predicate is accepted regardless of whether the observation actually supports it. We define \emph{action fidelity} as the fraction of \textsc{Realize} attempts in the original trajectory whose action would have produced a correct outcome without validation. Formally, for a trajectory with $m$ certified steps, action fidelity is $f = m_{\text{correct-on-first}} / m$, where $m_{\text{correct-on-first}}$ counts the steps whose first \textsc{Realize} attempt was subsequently certified by \textsc{Validate}. The ablated score is the original score multiplied by $f$, reflecting the expected degradation when incorrect findings propagate unchecked.
    
    \paragraph{Certified-prefix ratio (for $-$\textsc{Replan}).}
    	When \textsc{Replan} is removed, the agent terminates as soon as the per-target attempt budget $b$ is exhausted at any predicate. We define the \emph{certified-prefix ratio} as $r = t_{\text{stall}} / n$, where $t_{\text{stall}}$ is the cursor position at which the first budget exhaustion occurs and $n$ is the plan length. For runs that complete without triggering \textsc{Replan}, $r = 1$ and the score is unchanged. For runs that required replanning, the ablated score is the original score multiplied by $r$, reflecting the fraction of the plan that would have been certified before termination.
    
    \paragraph{Cascade ($-$Cascade).}
    	When cascade is disabled by capping \textsc{Validate} at $k \leq 1$, each action certifies at most one predicate. The additional steps that would have been required are counted from the original trajectory by summing $(k_i - 1)$ over all steps where $k_i \geq 2$. If the total step count under this rule exceeds the agent's budget, the run is marked as incomplete and scored proportionally. On benchmarks where cascade rarely occurs (e.g., TravelPlanner with $k = 1$ on all steps), the ablated score equals the original.
    
    \paragraph{Limitations of replay estimation.}
    	Replay estimates assume that removing a mechanism does not alter the distribution of subsequent actions or observations. In practice, an unchecked error at step $t$ may cause downstream actions to diverge from the recorded trajectory. The estimates therefore represent an optimistic bound on the true ablation effect, since cascading errors would likely degrade performance further. The direction and relative ordering of the estimates across benchmarks are nonetheless informative for identifying which mechanism contributes most in each environment.
\end{document}